\theoremstyle{plain}
\newtheorem{theorem}{Theorem}[section]
\newtheorem{lemma}[theorem]{Lemma}
\theoremstyle{definition}
\newtheorem{definition}[theorem]{Definition}
\theoremstyle{remark}
\newtheorem{remark}[theorem]{Remark}
\newcommand{\R}{\mathbb{R}} 
\newcommand{\M}{\mathcal{M}}
\newcommand{\N}{\mathcal{N}}
\title{Multi-Scale Geometric Autoencoder}
\author{
    Qipeng Zhan,
    Zhuoping Zhou,
    Zexuan Wang,
    Li Shen
}
\begin{document}

\maketitle

\begin{abstract}
Autoencoders have emerged as powerful models for visualization and dimensionality reduction based on the fundamental assumption that high-dimensional data is generated from a low-dimensional manifold. A critical challenge in autoencoder design is to preserve the geometric structure of data in the latent space, with existing approaches typically focusing on either global or local geometric properties separately. Global approaches often encounter errors in distance approximation that accumulate, while local methods frequently converge to suboptimal solutions that distort large-scale relationships. We propose Multi-Scale Geometric Autoencoder (MAE), which introduces an asymmetric architecture that simultaneously preserves both scales of the geometric structure by applying global distance constraints to the encoder and local geometric constraints to the decoder. Through theoretical analysis, we establish that this asymmetric design aligns naturally with the distinct roles of the encoder and decoder components. Our comprehensive experiments on both synthetic manifolds and real-world datasets demonstrate that MAE consistently outperforms existing methods across various evaluation metrics.
\end{abstract}


\section{Introduction}

The analysis of high-dimensional data represents a significant challenge within the field of machine learning: the extraction of low-dimensional representations that accurately preserve the fundamental structure of complex datasets~\cite{wang2014generalized}. Although autoencoders have been recognized as effective instruments for dimensionality reduction, an important limitation remains—standard autoencoders that are solely optimized for reconstruction error frequently do not capture the inherent geometric structure of the data manifold. This shortcoming can result in representations that distort meaningful relationships present in the data, thereby jeopardizing downstream tasks that depend on these geometric properties~\cite{trofimov2023learning,spae,gropp2020isometric}.


The challenge of preserving geometric structure in learned representations has led to two distinct methodologies in the literature. The first, exemplified by the Structure-Preserving Autoencoder~\cite{spae} and the Witness Autoencoder~\cite{wae}, emphasizes preserving global structure through similarity graphs that encapsulate relationships across the entire dataset. Although effective in maintaining large-scale structures, these approaches face a key limitation: heightened sensitivity to graph construction quality due to error accumulation when approximating geodesic distances via shortest paths, especially for distant points linked through multiple edges.


The second approach, exemplified by the Graph Geometry-Preserving Autoencoder (GGAE)~\cite{ggae} and the Geometric Autoencoder (GeomAE)~\cite{geomae}, underscores the importance of preserving local geometric properties within small neighborhoods. These methodologies demonstrate exceptional proficiency in maintaining local structure; however, they frequently converge toward suboptimal solutions that compromise global relationships, as their optimization is primarily concerned with local properties.


Our principal insight reveals that these methodologies are not merely complementary; rather, a fundamental asymmetry exists in the manner in which global and local geometric constraints align naturally with various components of the autoencoder architecture. More specifically, we illustrate that the preservation of global structure is inherently more appropriate for the encoder, whereas local geometric constraints prove to be more effective when applied to the decoder. This observation prompts us to propose the \emph{Multi-scale Geometric Autoencoder (MAE)}, which capitalizes on this asymmetry through the strategic application of constraints: global distance preservation in the encoder and local geometric preservation in the decoder.


Through extensive theoretical analysis and empirical validation, we demonstrate that this asymmetric design successfully addresses the limitations of both purely global and purely local approaches. Our key contributions include:


\begin{enumerate}[leftmargin=*,nosep]
    \item A theoretical framework that establishes the natural alignment between global constraints and encoders versus local constraints and decoders. 
    \item The Multi-scale Geometric Autoencoder architecture optimally combines global distance preservation in the encoder with local isometric/conformal constraints in the decoder, demonstrating superior performance across diverse datasets.
    \item Extensive experimental validation demonstrating significant improvements over existing methods across multiple benchmarks and real-world applications.
\end{enumerate}

Our empirical results demonstrate two key findings. First, the asymmetric application of geometric constraints consistently outperforms traditional uniform constraint approaches across all evaluation metrics. Through comprehensive ablation studies, we establish that both global and local components play essential, complementary roles - removing either component leads to significant performance degradation, highlighting the necessity of our unified framework. Second, we uncover a previously unrecognized relationship between the effectiveness of regularization and latent space dimensionality. This relationship reveals that the optimal balance between global and local constraints varies systematically with the intrinsic dimensionality of the data manifold, providing new insights into the geometric principles underlying dimensional reduction.

The remainder of this paper is organized as follows: Section~\ref{sec:related} reviews related work in geometric deep learning and autoencoder architectures. Section~\ref{sec:prelim} presents the theoretical foundations of our approach. Section~\ref{sec:MAE} introduces our Multi-scale Geometric Autoencoder architecture and training framework. Section~\ref{sec:exp} presents experimental results and analysis.
\section{Related Work} \label{sec:related}




Research on preserving geometric structures in representation learning encompasses various interconnected fields, ranging from classical manifold learning to contemporary deep learning techniques~\cite{wu2022generalized,lin2008riemannian,celledoni2021structure,liu2021deep}. We organize our discussion around two central themes: the foundations of manifold learning and geometric autoencoders.

\paragraph{Manifold Learning and Geometric Structure Preservation}
Classical manifold learning methods established the fundamental principles for preserving geometric structure during dimensionality reduction. ISOMAP~\cite{balasubramanian2002isomap} pioneered global structure preservation through geodesic distance maintenance, while LLE~\cite{roweis2000nonlinear} focused on preserving local geometric relationships. These approaches demonstrated the importance of geometric preservation but were limited by their inability to handle complex, high-dimensional data and their lack of generalization to out-of-sample points. Recent work has extended these ideas to deep learning frameworks, enabling more flexible and scalable solutions.

\paragraph{Geometric Autoencoders}
Modern approaches to geometric structure preservation in autoencoders can be categorized based on their preservation targets:

\textit{Global Structure Preservation:} Methods like SPAE~\cite{spae} and WAE~\cite{wae} focus on maintaining dataset-wide geometric relationships by preserving similarity graphs or distance matrices. While these approaches effectively capture large-scale structures, they face fundamental limitations in graph construction quality and distance approximation accuracy. GRAE~\cite{grae} attempted to address these issues through adaptive graph refinement, but challenges remain in balancing computational complexity with preservation accuracy.

\textit{Local Geometry Preservation:} Approaches such as LOCA~\cite{peterfreund2020local}, GGAE~\cite{ggae}, and GeomAE~\cite{geomae} prioritize preserving differential geometric properties within local neighborhoods. These methods typically impose constraints on the Jacobian matrix to maintain local isometries or conformal mappings. While effective at preserving local structure, they often struggle with global consistency, particularly when the manifold contains complex topological features.
\section{Preliminaries} \label{sec:prelim}
In this section, we present the fundamental mathematical concepts from differential geometry that support our approach. We start with basic definitions and progressively develop the specific geometric properties we intend to preserve in our autoencoder framework.

\subsection{Differentiable Manifold and Local Coordinates}
In differential geometry, an $m$-dimensional manifold $\M$ is called a differentiable manifold if it is locally diffeomorphic to $\R^m$. A chart on $\M$ is an ordered pair $(U,\varphi)$ where $U$ is an open subset of $\M$ and $\varphi:U\rightarrow\varphi(U)\subseteq \R^n$ is a homeomorphism. It provides a local coordinate representation for points in $U$. An atlas is a family of charts $\{(U_\gamma,\varphi_\gamma)\}_\gamma\in\Gamma$ such that $\M = \bigcup_{\gamma\in\Gamma}U_\gamma$. An atlas provides a local coordinate system of $\mathcal{M}$, enabling the use of properties and tools of Euclidean space to study the manifold locally.

\subsection{Riemannian Metric and Isometric mapping}
\label{localiso}
A Riemannian manifold $(\M,g)$ is a differentiable manifold $\M$ equipped with a Riemannian metric $g$, which is a covariant 2-tensor field that gives inner products for each tangent space of the manifold. Let $f: \M\rightarrow \R^n$ ($n\geq m$) be an embedding. This embedding induces the Riemannian metric:
\begin{align}
    g^f=J_f(z)^\top J_f(z),
    \label{induced metric}
\end{align}
where $z$ is a local coordinate system of $\M$ and $J_f(z)$ is the Jacobian matrix of $f$ with respect to $z$. 

A local diffeomorphism $F$ between two Riemannian manifolds $(\M,g^{\M})$ and $(\N,g^{\N})$ is said to be local isometric if and only if $F^* g^\N = g^\M$,
where $F^* g^\N$ denotes the \textbf{pullback metric}. Formally, for a point $p\in \M$, and any tangent vectors $u,v\in T_p\M$, the pullback metric $F^* g^\N$ is defined as:
\begin{align}
    (F^* g^\N)_p(u,v) := g^\M_{F(p)}(d_pF(u),d_pF(v)).
    \label{pullback metric}
\end{align}
$d_pF:T_p\M\rightarrow T_{F(p)}F(\M)$ is the differential of $F$ at $p$. 

Let $\M$ be an $m$-dimensional differentiable manifold with local coordinates $z\in\R^m$ induced by an atlas. Now suppose there is a (local) isometric mapping $F: f(\M)\rightarrow \R^l$ ($l\geq m$), thus $F^*g^{F\circ f}=g^{f}$. By \eqref{pullback metric} and \eqref{induced metric}, this gives:
\begin{align}
    ( J_F(f(z))J_f(z))^\top(J_F(f(z))J_f(z))
    =J_f(z)^\top J_f(z).
    \label{isometric}
\end{align}
Here, the differential of $F$ is exactly the Jacobian matrix of $F$ with respect to $f$. Note that \eqref{isometric} is equivalent to:
\begin{align}
    J_f(z)^\top\Big[J_F(f(z))^\top J_F(f(z))-I_n\Big]J_f(z)
    =\mathbf{0}.
    \label{condition}
\end{align}
Since $\text{span}\{J_f(z_1),\cdots,J_f(z_l)\}=T_{f(z)}f(\M)$, Eq.\eqref{condition} implies that
\begin{align}
    J_F(f(z))^\top J_F(f(z))\Big|_{T_{f(z)}f(\M)} = I_n\Big|_{T_{f(z)}f(\M)},
    \label{localcondition}
\end{align}
i.e., $J_F(f(z))^\top J_F(f(z))$ behaves like an identity transformation restricted on the tangent space.

\subsection{Distance on Riemannian Manifold}
If a Riemannian manifold $(\M,g)$ is connected, the Riemannian metric $g$ naturally introduces a distance function $d_g:\M\times\M\rightarrow \R$. Such a distance turns $\M$ into a metric space $(\M,d_g)$ whose topology is the same as the original manifold topology (see Appendix \ref{distance} for more details). This point is crucial because it allows us to study both the local (Jacobian matrix) and global (distance) properties of a manifold in a consistent way.

Now, we revisit the concept of isometry in the context of metric spaces. For two Riemannian metric-induced metric space
$(\M,d_{g^\M})$ and $(\N,d_{g^\N})$, a mapping $F:\M\rightarrow\N$ is called isometric if and only if $\forall x,y\in\M$,
\begin{align}
    d_{g^\N}(F(x),F(y))=d_{g^\M}(x,y).
    \label{globalcondition}
\end{align}

\section{Multi-Scale Geometric Autoencoder} \label{sec:MAE}
\subsection{Problem Setting}

The conventional assumption for the dimension reduction task is that the given dataset $X=\{x_1,\cdots,x_N\}$ is sampled from an $m$-dimensional manifold $\M$ embedded in $\R^n$. The goal is to re-embed the data manifold into $\R^n$ such that $m<l\ll n$. We denote the embeddings of $\M$ in $\R^n$ and $\R^l$ as $\M_n$ (ambient representation) and $\M_l$ (latent representation) respectively. An autoencoder is a pair of mappings $(E,D):\R^n\stackrel{E}{\longrightarrow}\R^l\stackrel{D}{\longrightarrow}\R^n$, where $E$ is called the encoder and $D$ is called the decoder. The encoder will re-embed the data from higher dimensional ambient space into a lower dimensional latent space, and the decoder will reconstruct the data based on the latent representation. Usually, both encoder and decoder are parametrized by deep neural networks. 

\subsection{Global and Local Geometric Regularization}
A vanilla autoencoder minimizes only the reconstruction loss:
\begin{align}
    \mathcal{L}_{\text{recon}} = \frac{1}{N} \sum_{i=1}^{N}\|x_i-D(E(x_i))\|_2^2.
    \label{reconloss}
\end{align}
This formulation imposes no constraints on the latent representation, offering no assurance that the encoder will maintain the geometric structure of the data. To achieve a more faithful latent representation, we introduce geometric regularization to encourage both the encoder $E$ and decoder $D$ to be isometric.

\subsubsection{Global Isometric Regularization}
Following \eqref{globalcondition}, computing the exact Riemannian (geodesic) distance requires taking an infimum over infinitely many integrals (see Appendix \ref{lengthofcurve}), which is computationally intractable. In practice, the latent distance $d_{g^{\M_l}}$ is typically approximated by the Euclidean distance for simplicity and interpretability. While this approximation may introduce some deformation in the latent representation, we adopt it in our work as the additional local isometric regularization helps mitigate such deformation.

For ambient spaces, where data manifolds are often curved (e.g., the \emph{Swiss Roll}), we cannot make similar Euclidean assumptions. Instead, we approximate geodesic distances by constructing a k-Nearest-Neighbors graph and computing shortest paths using either the Floyd-Warshall algorithm \cite{floyd} or Dijkstra's algorithm \cite{dijkstra}. This approximation is impractical for the reconstructed representation $D(E(\M_n)$ as it evolves during training, making per-step distance computations prohibitively expensive. Therefore, we apply global isometric regularization only to the encoder $E$.

The global isometric loss $\mathcal{L}_{\text{global}}$ can be evaluated in either absolute terms:
\begin{align}
    \mathcal{L}_{\text{global}}^{\text{abs}} = \frac{2}{N(N-1)}\sum_{i<j}^N (d^\M_{ij}-d^E_{ij})^2,
\end{align}
or relative terms:
\begin{align}
    \label{relative error}
    \mathcal{L}_{\text{global}}^{\text{rel}} = \frac{2}{N(N-1)}\sum_{i<j}^N (\frac{d^\M_{ij}-d^E_{ij}}{d^\M_{ij}})^2,
\end{align}
where $d_{ij}^\M$ denotes the approximated geodesic distance between $x_i$ and $x_j$, and $d_{ij}^E$ denotes the Euclidean distance between $E(x_i)$ and $E(x_j)$.

\subsubsection{Local Isometric Regularization}
Following \eqref{localcondition}, verifying the isometry condition requires access to $T_{f(z)}f(\M)$ or $J_f(z)$, which is impossible due to the unknown nature of the embedding $f$. Therefore, we tighten the condition to:
\begin{align}
    J_F(f(z))^\top J_F(f(z)) = I.
    \label{tight}
\end{align}
However, for the encoder case (i.e., $F=E$), $J_E$ is an $l\times n$ matrix, which means $J_E^\top J_E$ is an $n\times n$ matrix with $\text{rank}(J_E^\top J_E)\leq \min(l,n)=l<n$. This makes it impossible for $J_E^\top J_E$ to be an identity matrix $I_n$. Hence, we apply this local isometric regularization only to the decoder $D$:
\begin{align}
    \mathcal{L}_{\text{local}}=\sum_{i=1}^N\|J_D(E(x_i))^\top J_D(E(x_i)) - I_l\|_F^2.
    \label{localreg}
\end{align}

The total loss for our multi-scale geometric autoencoder combines the reconstruction loss, global isometric loss on $E$, and local isometric loss on $D$:
\begin{align}
    \mathcal{L}_{\text{total}} = \mathcal{L}_{\text{recon}} + \lambda_{\text{global}}\cdot \mathcal{L}_{\text{global}} + \lambda_{\text{local}}\cdot\mathcal{L}_{\text{local}},
\end{align}
where $\lambda_{\text{global}}$ and $\lambda_{\text{local}}$ are weight coefficients.

\subsubsection{Conformal Relaxation}
The local isometric constraint can be overly restrictive in certain cases. For example, a half sphere cannot be isometrically embedded into $\R^2$ (see Appendix \ref{hemisphere}), meaning it cannot be flattened without distortion. To address this limitation, we introduce conformal mappings as a natural relaxation, which preserves angles between curves while allowing for local uniform scaling.

Formally, A local diffeomorphism $F$ between two Riemannian manifolds $(\M,g^{\M})$ and $(\N,g^{\N})$ is locally conformal if $\forall p\in\M$, $(F^* g^\N)_p = \lambda_p g_p^\M$. Apparently, when $\lambda_p \equiv 1$, $F$ reduces to an isometry. Analogous to \eqref{tight}, the tightened local conformal constraint is:
\begin{align}
    H^F(z):=J_F(f(z))^\top J_F(f(z)) = \lambda_z I.
    \label{tightconformal}
\end{align}

This relaxation maintains the local geometric structure of the data, preserving shapes up to scaling, which makes it particularly useful in applications where the relative arrangement of points is more important than absolute distances. Note that $\lambda_z$ is unknown and varies for different $z$, so we can not simply regularize it using matrix norm. A practical approach is to separately constrain the diagonal and off-diagonal elements. We decompose the regularization into two components: 
\begin{itemize}[leftmargin=*,nosep]
    \item one that forces the off-diagonal elements to be close to zero
    \begin{align}
        \mathcal{L}_{\text{off}}=\sum_{i=1}^N\sum_{j\neq k}(H^D_{jk}(E(x_i)))^2,
    \end{align}
    \item and another that promotes uniformity among the diagonal elements without specifying their actual value
    \begin{align}
        \mathcal{L}_{\text{diag}}=\sum_{i=1}^N\sum_{j\neq k}^l (H(E(x_i))_{jj}-H(E(x_i))_{kk})^2.
    \end{align}
\end{itemize}    
Then, the local conformal loss is 
\begin{align}
    \mathcal{L}_{\text{local}}^{\text{con}}=\mathcal{L}_{\text{off}} + \lambda_{\text{diag}}\cdot\mathcal{L}_{\text{diag}},
\end{align}
where $\lambda_{\text{diag}}$ is a weight coefficient.
\section{Experiments} \label{sec:exp}
\subsection{Experimental Setup}
\textbf{Baselines:} We select the two widely used non-parametric methods, UMAP\cite{UMAP} and t-SNE\cite{tsne}, as well as several autoencoder-based models including vanilla AE, SPAE\cite{spae}, GGAE\cite{ggae}, GeomAE\cite{geomae}, GRAE\cite{grae} to be our baseline models. To ensure a fair comparison, we performed a systematic grid search to identify the optimal hyperparameters for each baseline model. For the exact hyperparameters used for each experiment, see Table~\ref{tab:parameter} in the Appendix for detailed hyperparameter settings.

\textbf{Ours:} We use the relative error in \eqref{relative error} as our global loss through all datasets and apply both isometric local loss and conformal local loss to our \emph{Multi-Scale Geometric Autoencoder}, which refers to MAE-iso and MAE-con, respectively. The exact hyperparameters used are presented in Table~\ref{tab:parameter_MAE} in the Appendix.

\textbf{Datasets:} To thoroughly evaluate our approach, we conduct experiments on two synthetic datasets (\emph{Swiss Roll}, \emph{Toroidal Helix}) and three real-image datasets (\emph{dSprites}, \emph{Teapot}, and \emph{Objective Tracking}).

\textbf{Warm Up and Weight Decay:} In our experiments, we observed that using a larger $\lambda_{\text{global}}$ at the beginning of training helps the model escape local optima more quickly. However, as training progresses, the global loss begins to dominate the local loss, preventing the model from reaching its best possible performance if  $\lambda_{\text{global}}$ remains large. To address this, we employ warm-up and weight-decay strategies. Specifically, we initialize  $\lambda_{\text{global}}$ with a large value and omit the local loss during the first $T=120$ epochs (warm-up). Meanwhile, we multiply $\lambda_{\text{global}}$ by a decay factor $exp(-\alpha\cdot n_{\text{epoch}})$. These techniques allow the model to converge faster and achieve better performance.


\textbf{Evaluation:}
We qualitatively evaluate our proposed model and other baselines by visualizing the latent data embedded in each post-trained model and comparing them with ideal cases.
In addition, we assess the accuracy of the model by calculating the mean square error between the reconstructed data and the original data. To evaluate how well the learned latent representations preserve the geometric structure of data, we use metrics from \cite{art} and \cite{tae}. The metrics are 1. \emph{kNN}, which calculates the \emph{kNN} recall from latent space to original space, and 2. $KL_\sigma(\sigma\in\{0.01,0.1,1\})$, which measures the Kullback-Leibler divergence based on density estimates in latent and original spaces with different length-scales. In their original form, these metrics are intended to gauge how accurately Euclidean distances in the data space are preserved in the latent representation. In our work, we adapt them by replacing these Euclidean distances with the approximated geodesic distances derived from the similarity graph. Consequently, \emph{kNN} and $KL_{0.01}$ primarily capture local geometry, whereas  $KL_{1}$ measure more global structures. $KL_{0.1}$, serving as an intermediate metric, balances both local and global geometric fidelity. See Appendix~\ref{metrics}  for more details about the metrics.

\subsection{Results}
\begin{table*}[ht]
\centering
\caption{Quantitative evaluation on the Swiss Roll dataset comparing our MAE approach with baseline methods. Lower values ($\downarrow$) are better for reconstruction error (recon.) and KL-divergence, while higher values ($\uparrow$) are better for k-nearest neighbor preservation (KNN). MAE achieves superior performance across metrics measuring geometric preservation. The best performance for each metric is shown in bold, and the second-best is underlined.}
\vspace{-0pt}
\begin{tabular}{cccccc}
\toprule
Method & recon. ($\downarrow$) & kNN ($\uparrow$) & KL$_{0.01}$ ($\downarrow$) & KL$_{0.1}$ ($\downarrow$) & KL$_{1}$ ($\downarrow$) \\
\midrule
MAE-iso & \underline{2.80e-4} & \textbf{9.73e-1} & \textbf{1.02e-2} & 3.91e-2 & 2.80e-3\\
MAE-con & \textbf{2.67e-4} & \underline{9.69e-1} & \underline{1.71e-2} & 4.34e-2 & 2.88e-3 \\
\midrule
GeomAE & 3.25e-1 & 5.45e-1 & 2.79e-2 & \underline{6.20e-3} & \textbf{1.15e-3}\\
GGAE & 2.65e-2 & 6.81e-1 & 1.43e-1 & 5.07e-2 & 2.43e-3\\
GRAE & 2.14e-2 & 6.78e-1 & 4.47e-1 & 8.15e-2 & 2.34e-3\\
SPAE & 6.05e-4 & 9.44e-1 & 1.71e-2 & 4.43e-2 & 2.94e-3\\
TSNE & NaN & 8.31e-1 & 1.81e-2 & \textbf{3.18e-2} & \underline{1.75e-3}\\
UMAP & NaN & 8.02e-1 & 1.43e-2 & 4.27e-2 & 3.43e-3\\
Vanilla AE & 2.89e-2 & 6.32e-1 & 5.18e-1 & 8.96e-2 & 2.61e-3\\
\bottomrule
\end{tabular}
\label{tab:numerical_swissroll}
\end{table*}
\subsubsection{\emph{Swiss Roll}}
\begin{figure*}[ht]
    \centering
    \includegraphics[width=0.7\linewidth]{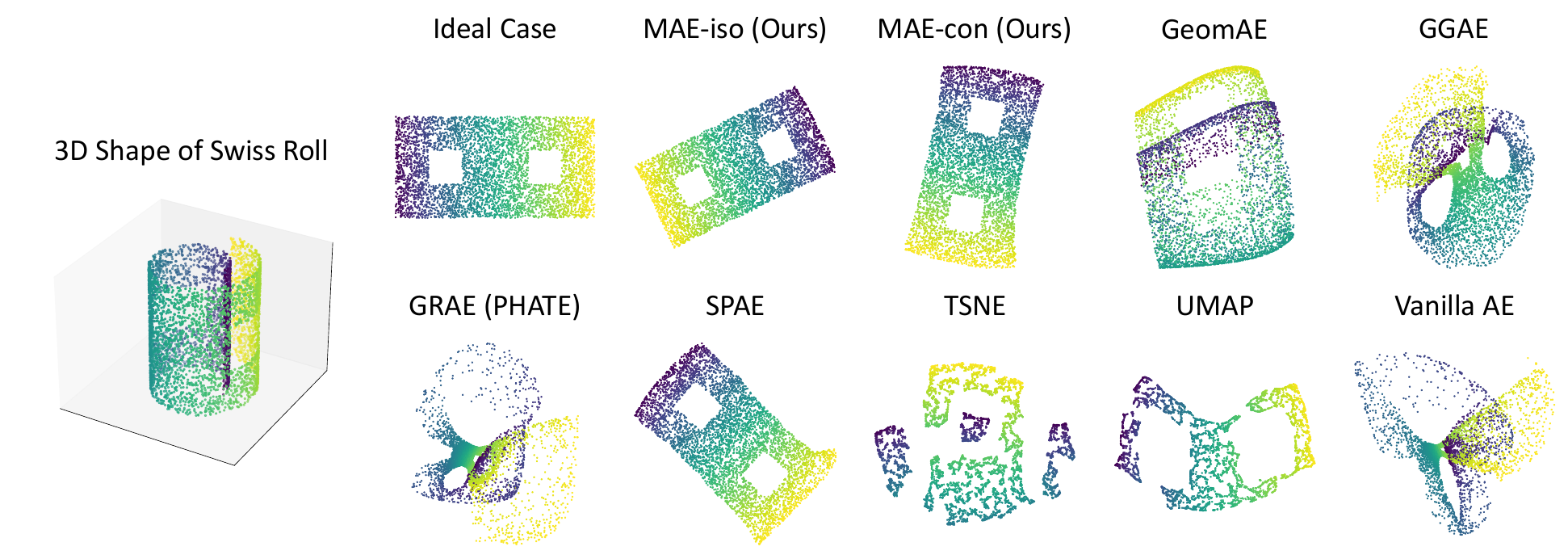}
    \vspace{-0pt}
    \caption{Left: The Swiss Roll manifold visualized in original 3D space and its ideal 2D unwrapped form. Other panels: Learned 2D latent representations from our MAE variants (MAE-iso and MAE-con) and baseline methods. Both MAE variants closely match the ideal unwrapped structure, with SPAE showing partial success, while other methods fail to preserve the manifold's topology.}
    \label{fig:swissroll}
\end{figure*}
The Swiss Roll dataset is a synthetic manifold constructed by sampling points in three-dimensional space, with each point lying on a two-dimensional, spiral-shaped surface, as shown in Figure \ref{fig:swissroll}. Conceptually, this surface can be viewed as a ``rolled sheet'', where the distance between two points depends on paths confined to the surface. One often constructs a k-nearest neighbor graph based on local Euclidean distances between nearby points to approximate these geodesic distances. In our experiment, the spiral contains two empty regions, which forces paths on the k-NN graph to curve around it, creating a mismatch between local connectivity and the global geodesics of the manifold.

\textbf{Result:} Figure \ref{fig:swissroll} shows the two-dimensional latent spaces learned by our method and several benchmarks, while Table \ref{tab:numerical_swissroll} reports the corresponding numerical results. Among all methods evaluated, only the two MAE variants and SPAE recover latent structures that closely resemble the ideal Swiss Roll geometry. MAE-iso aligns most closely with the ground truth, followed by MAE-con. SPAE displays minor boundary distortions but otherwise preserves the manifold’s fundamental shape. Numerical evaluation is shown in Table \ref{tab:numerical_swissroll}.

\subsubsection{\emph{Toroidal Helix}}
\begin{figure*}[ht]
    \centering
    \includegraphics[width=0.7\linewidth]{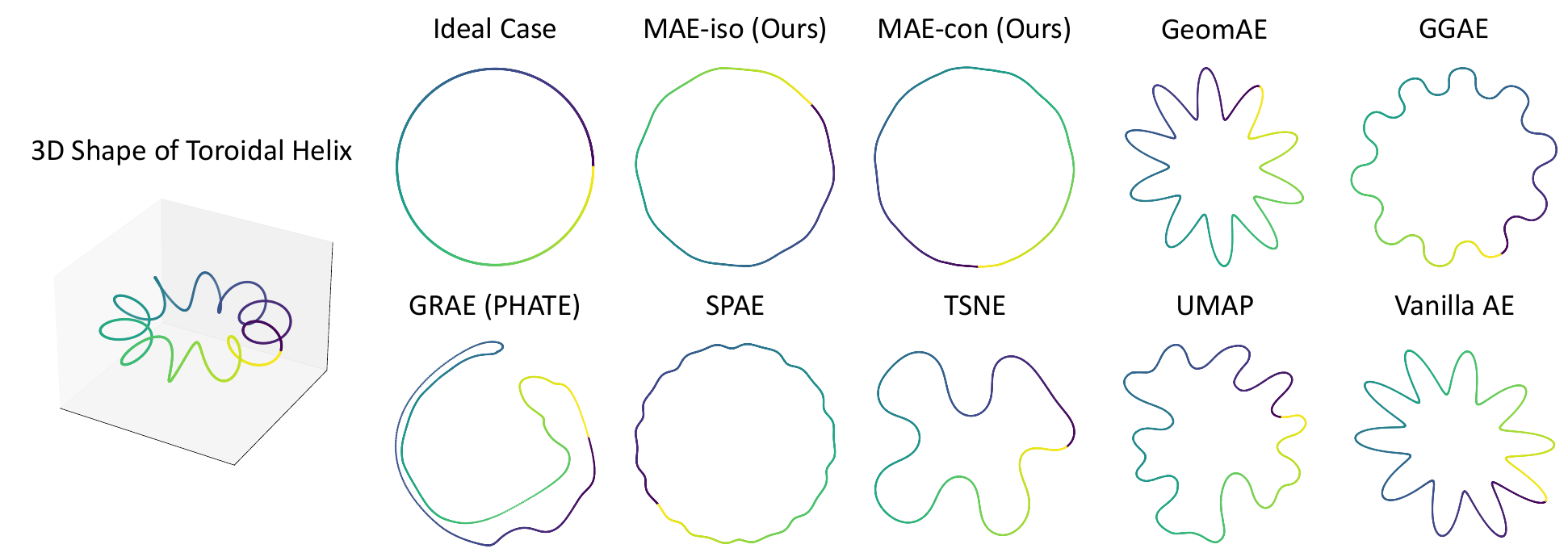}
    \vspace{-0pt}
    \caption{Top-left: The Toroidal Helix manifold shown in its original 3D space and its ideal 2D unwrapped representation. Other panels: Learned 2D latent representations from our MAE variants (MAE-iso and MAE-con) and baseline methods. Both MAE variants successfully recover the global toroidal structure, with SPAE achieving partial preservation. Other methods only preserve local neighborhoods while failing to capture the global topology.}
    \label{fig:helix}
\end{figure*}
The Toroidal Helix dataset is generated by creating points along a helical structure that winds around a torus—a donut-shaped surface. This structure twists in three-dimensional space, yielding complex, spiraled patterns that inherently lie on a curved manifold. Although the data are embedded in $\mathbb{R}^3$, the effective latent representation can be regarded as (approximately) a circle in two dimensions.

\textbf{Result:} Figure \ref{fig:helix} shows the two-dimensional latent spaces learned by our method and several benchmarks, while Table \ref{tab:numerical_helix} presents the corresponding numerical results. Among the methods evaluated, only the two MAE variants and SPAE recover latent structures that closely resemble the ideal helix geometry. MAE-iso and MAE-con align most closely with the ground truth, followed by SPAE, which exhibits minor local distortions compared to the ideal case. In contrast, GeomAE, GGAE, VanillaAE, and UMAP capture the local structure but fail to unfold the helix fully. Numerical evaluation is detailed in Appendix~\ref{app:exp_results}.

\subsubsection{\emph{Teapot}}
\begin{figure*}[ht]
    \centering
    \includegraphics[width=0.7\linewidth]{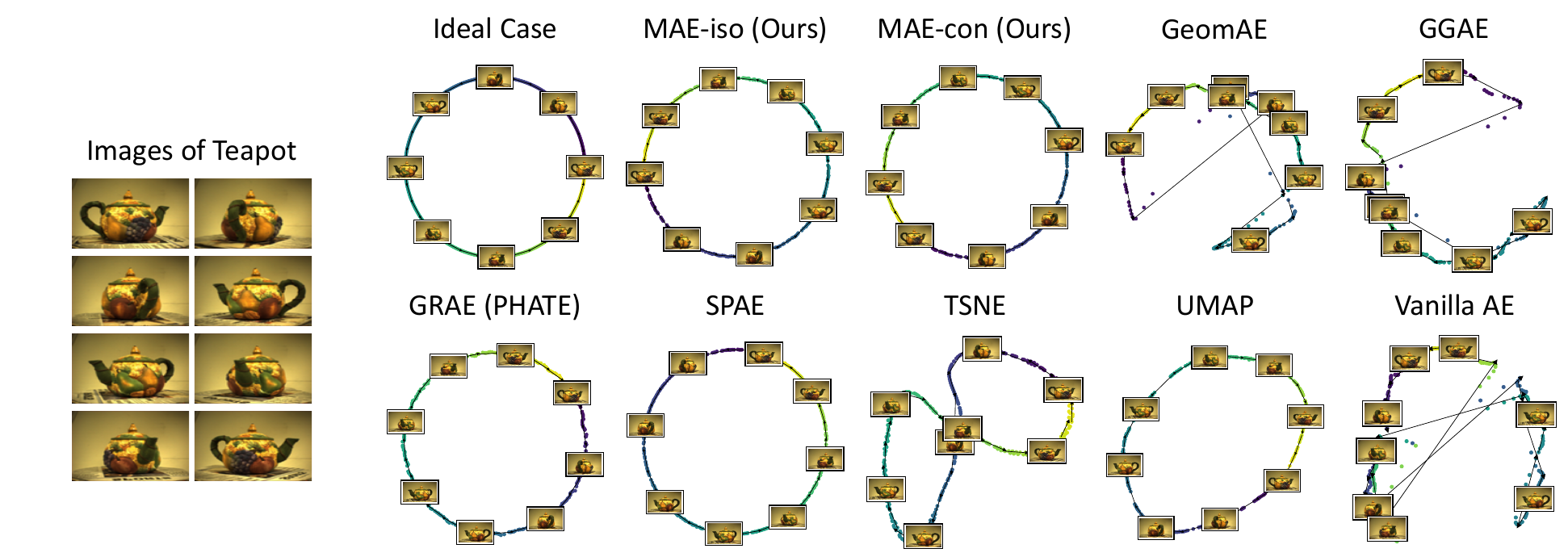}
    \vspace{-0pt}
    \caption{Left: Teapot images rendered from different viewing angles in original space and their ideal 2D manifold structure. Other panels: Learned 2D latent representations from our MAE variants (MAE-iso and MAE-con) and baseline methods. MAE variants, GRAE(PHATE), and SPAE successfully capture the underlying view-dependent manifold structure, while other methods fail to preserve the circular topology.}
    \label{fig:teapot}
\end{figure*}
The Teapot dataset \cite{weinberger2004learning} provides a high-dimensional yet geometrically simple manifold. It consists of 400 color images of a teapot, each with a resolution of 76 $\times$ 101 and three channels of color information per pixel. These images are captured by rotating the teapot through a full 360$^\circ$ in the plane, making the adequate parameter space a one-dimensional circle. Each image can be associated with a single rotation angle, and therefore, the entire dataset could lie on a two-dimensional circle.

\textbf{Result:} Figure \ref{fig:teapot} displays the two-dimensional latent spaces learned by our method and several benchmarks, while Table \ref{tab:numerical_teapot} provides the corresponding numerical results. Among all approaches evaluated, the two MAE variants, GRAE(PHATE), SPAE, and UMAP, successfully recover a two-dimensional loop structure. In contrast, the other methods fail to preserve this topological feature in their latent representations. Numerical evaluation is shown in Appendix~\ref{app:exp_results}.

\subsubsection{\emph{Object Tracking}}
\begin{figure*}[!ht]
    \centering
    \includegraphics[width=0.7\linewidth]{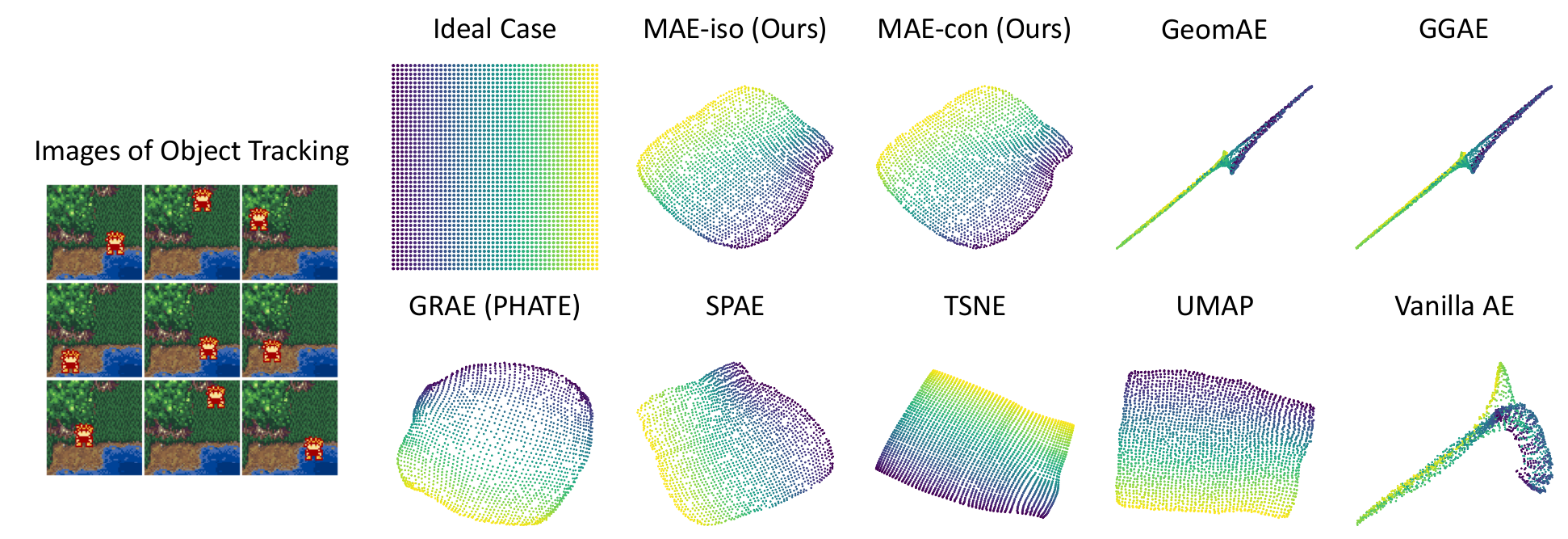}
    \vspace{-0pt}
    \caption{Left: Sample frames from the Object Tracking dataset and their corresponding idealized 2D manifold structure. Remaining panels: Two-dimensional latent space visualizations learned by our Multi-scale Geometric Autoencoder variants (MAE-iso and MAE-con) compared to baseline methods.}
    \label{fig:object}
\end{figure*}

The Object Tracking dataset \cite{grae} consists of approximately 2,000 RGB images, each containing a 16×16 character moving against a 64×64 background with added Gaussian noise. As the character shifts in the horizontal and vertical directions, each image can be associated with an $(x, y)$ position on the background. These two parameters define the intrinsic two-dimensional manifold of the dataset, making it an ideal benchmark for examining how well algorithms can recover and represent simple planar motion from high-dimensional image inputs. 

\textbf{Result:}
Figure \ref{fig:object} displays the two-dimensional latent representation learned by our method and several benchmarks,
while Table \ref{tab:numerical_objective} provides the corresponding numerical results. The ideal latent representation for this experiment is a square. Among all the methods evaluated, most learned a latent space similar to the ideal case, except for GGAE, Geom-AE, and Vanilla-AE. Numerical evaluation is shown in Table \ref{tab:numerical_objective} in Appendix~\ref{app:exp_results}.

\subsubsection{\emph{dSprites}}
\begin{figure*}[ht]
    \centering
    \includegraphics[width=0.9\linewidth]{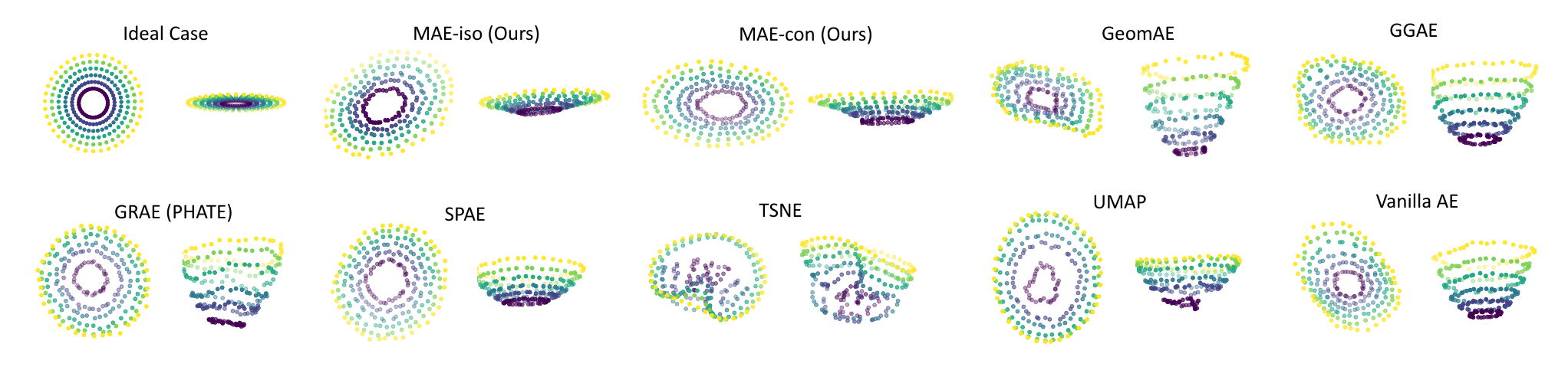}
    \vspace{-0pt}
    \caption{Left: dSprites dataset showing original images and their ideal 2D latent organization based on shape, scale, and rotation attributes. Other panels: Learned 2D latent representations from our MAE variants (MAE-iso and MAE-con) and baseline methods.}
    \label{fig:dsprite}
\end{figure*}

\begin{table*}[th]
\centering
\caption{Quantitative evaluation comparing MAE variants (MAE-iso and MAE-con) against autoencoders with exclusively global or local geometric constraints. Results demonstrate the advantages of combining both constraint types in MAE. The best performance for each metric is shown in bold, and the second-best is underlined.}
\vspace{-0pt}
\begin{tabular}{cccccc}
\toprule
Method & recon. ($\downarrow$) & kNN ($\uparrow$) & KL$_{0.01}$ ($\downarrow$) & KL$_{0.1}$ ($\downarrow$) & KL$_{1}$ ($\downarrow$) \\
\midrule
MAE-iso & \underline{2.80e-4} & \textbf{9.73e-1} & \textbf{1.02e-2} & 3.91e-2 & 2.80e-3\\
MAE-con & \textbf{2.67e-4} & \underline{9.69e-1} & 1.71e-2 & \underline{4.34e-2} & \underline{2.88e-3} \\
Global only & 1.09e-3 & 9.35e-1 & \underline{1.56e-2} & 4.38e-2 & 2.98e-3\\
Local isometric & 2.84e-1 & 5.98e-1 & 3.38e-2 & \textbf{9.57e-3} & \textbf{4.53e-4} \\
Local conformal & 3.19e-2 & 6.78e-1 & 3.13e-1 & 6.58e-2 & 1.91e-3\\
\bottomrule
\end{tabular}
\label{tab:numerical_ablation}
\end{table*}

The dSprites dataset \cite{dsprites17} is a synthetic benchmark for studying disentangled representation learning. It consists of 64×64 binary images of simple geometric shapes—squares, ellipses, and hearts—systematically generated by varying discrete factors: shape type, color (white/black), orientation, scale, and position $(x, y)$. In our setup, we fix color, shape, and position to be \emph{white}, \emph{heart}, and $(16/31, 16/31)$, respectively. Consequently, the dataset is reduced to hearts varying only by scale and orientation. The ideal latent representation is an annulus. Although it is a 2-dimensional manifold, we set (target) latent dimension to be \textbf{3} to evaluate the performance of each model under a misspecified latent dimension.

\textbf{Result:}
Figure \ref{fig:dsprite} displays the three-dimensional latent representation learned by our method and several benchmarks, while Table \ref{tab:numerical_dsprite} provides the corresponding numerical results.  In this experiment, the ideal scenario is a flat annulus embedded in $\R^3$, so a “flatter” latent representation indicates better performance. While all models, except t-SNE and UMAP, preserved the topology of the data, only the two MAE variants learned an almost flat latent space. This finding highlights the superiority of MAE when the latent dimension is misspecified.

\subsection{Ablation Study}

\begin{figure}[th]
    \centering
    \includegraphics[width=0.9\linewidth]{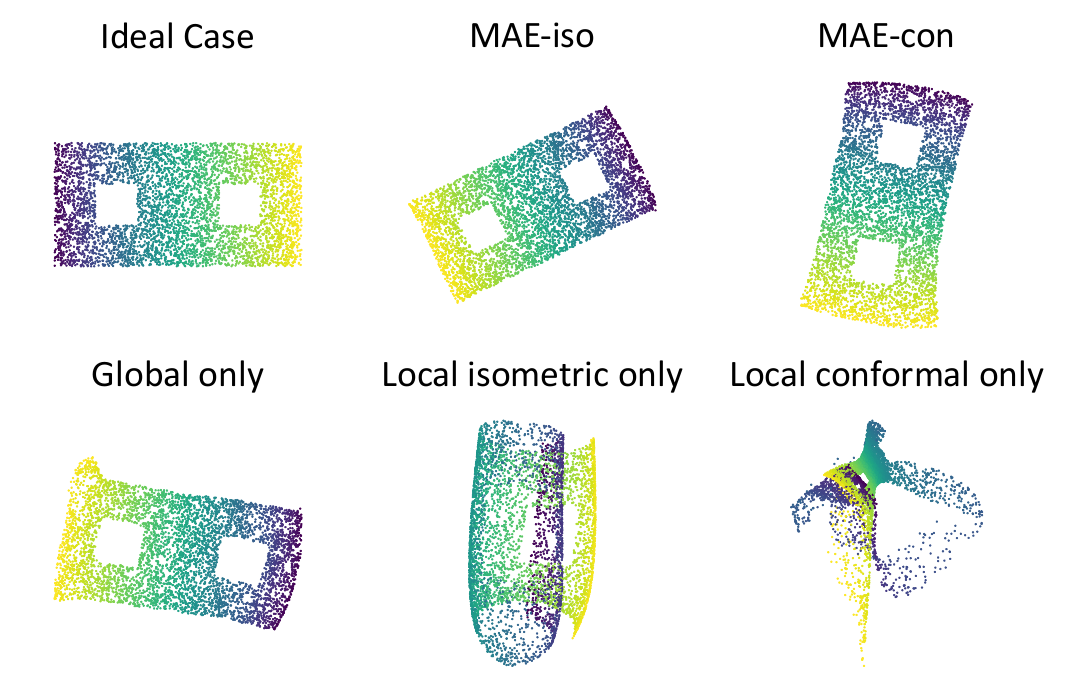}
    \vspace{-0pt}
    \caption{Two-dimensional latent representations comparing our MAE variants (MAE-iso and MAE-con) against autoencoders using solely global or local regularization. MAE-iso achieves near-ideal manifold preservation, while MAE-con exhibits slight geometric distortions. The global-only AE shows more significant distortions, and local-only AEs fail entirely to preserve the manifold structure.}
    \label{fig:ablation}
\end{figure}

We also conducted ablation studies to validate the necessity of both global and local regularization. Back to the Swiss Roll example, here, we run global(local)-only autoencoders by simply changing $\lambda_{\text{local}}~(\text{or}~ \lambda_{\text{global}})$ to 0 and keeping all other settings the same as our multi-scale autoencoder. Figure \ref{fig:ablation} displays the two-dimensional latent representation learned by our method and several benchmarks, while Table \ref{tab:numerical_ablation} provides the corresponding numerical results. This experimental result underscores the importance of incorporating both global and local geometric constraints in our model. Relying solely on local constraints leads the optimization to become trapped in local optima, preventing accurate preservation of the geometric structure in the latent space. Conversely, using only global constraints, while capable of maintaining the overall geometric layout, introduces distortions that result in the loss of local details.

\section{Conclusion} \label{sec:conclu}

This work introduces the Multi-Scale Geometric Autoencoder (MAE), which addresses a fundamental challenge in dimensionality reduction: simultaneously preserving both local and global geometric structures of high-dimensional data manifolds. Our key insight—that global and local geometric constraints naturally align with different components of the autoencoder architecture—led to an asymmetric design that applies global distance preservation in the encoder and local geometric constraints in the decoder.

Through extensive experimentation across synthetic and real-world datasets, we demonstrated that MAE consistently outperforms existing approaches in both quantitative metrics and qualitative assessments. The superior performance was particularly evident in challenging scenarios such as datasets with complex topological features (Swiss Roll, Toroidal Helix) and when the latent dimension is misspecified (dSprites). Our ablation studies revealed that global and local components are essential; removing either leads to significant performance degradation, validating our unified multi-scale approach.

\bibliography{aaai2026}



\newpage
\appendix
\onecolumn
\section{Appendix}
\subsection{Length and Distance on Riemannian Manifolds}
\label{distance}
\subsubsection{Length of Curves}
\label{lengthofcurve}
\begin{definition}[\textbf{Length of Curves}]
    Let $(\M,g)$ be a Riemannian manifold, $\gamma:[a,b]\rightarrow \M$ is a continuous and piecewise smooth curve parametrized by $t$, the length of $\gamma$ is defined as:
    \begin{align}
        L_\gamma = \int_a^b|\gamma'(t)|_gdt
        \label{length}
    \end{align}
\end{definition}
To verify this concept of length is well-defined, we need to show:
\begin{itemize}
    \item \textbf{The integral in \eqref{length} is finite.} The continuity and piecewise smoothness imply that $|\gamma'(t)|_g$ is continuous except on a finite subset of $[a,b]$ and has finite left/right limits at those points. This guarantees the finiteness we need.
    \item \textbf{The length in \eqref{length} is independent of parametrization.} This can be proved by the following lemma:
\end{itemize}
\begin{lemma}
    Suppose $(\M,g)$ is a Riemannian manifold and $\gamma:[a,b]\rightarrow\M$ is a continuous piecewise smooth curve. Let $\varphi:[c,d]\rightarrow[a,b]$ be monotone and differentiable (i.e. $\varphi'>0$ or $\varphi'<0$ on $[c,d]$), then $L_{\gamma\circ\varphi}=L_{\gamma}$.
\end{lemma}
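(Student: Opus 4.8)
The plan is to prove that the length functional is invariant under reparametrization by a direct computation using the chain rule and the change-of-variables formula for integrals, handling the two cases $\varphi' > 0$ and $\varphi' < 0$ separately (the latter also reverses the orientation of the interval, which is what ultimately makes the sign work out).

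First I would reduce to the smooth case: since $\gamma$ is piecewise smooth, there is a partition of $[a,b]$ on whose subintervals $\gamma$ is smooth, and pulling back by the monotone differentiable $\varphi$ gives a corresponding partition of $[c,d]$; because length is additive over such partitions, it suffices to prove $L_{\gamma\circ\varphi} = L_\gamma$ when $\gamma$ is smooth on the whole interval. Then I would write $(\gamma\circ\varphi)'(s) = \varphi'(s)\,\gamma'(\varphi(s))$ by the chain rule, so that in the Riemannian norm $|(\gamma\circ\varphi)'(s)|_g = |\varphi'(s)|\,|\gamma'(\varphi(s))|_g$, using that $g$ is a norm on each tangent space and hence positively homogeneous. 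This yields
\begin{align}
    L_{\gamma\circ\varphi} = \int_c^d |\varphi'(s)|\,|\gamma'(\varphi(s))|_g\,ds.
\end{align}

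Next I would apply the substitution $t = \varphi(s)$. If $\varphi' > 0$, then $\varphi(c) = a$, $\varphi(d) = b$, and $|\varphi'(s)| = \varphi'(s)$, so the change of variables gives directly $\int_a^b |\gamma'(t)|_g\,dt = L_\gamma$. If $\varphi' < 0$, then $\varphi(c) = b$, $\varphi(d) = a$, and $|\varphi'(s)| = -\varphi'(s)$; the substitution produces $\int_b^a |\gamma'(t)|_g\,(-1)\,dt = \int_a^b |\gamma'(t)|_g\,dt = L_\gamma$, where the two sign flips (one from $|\varphi'|$, one from swapping the limits of integration) cancel. In both cases $L_{\gamma\circ\varphi} = L_\gamma$.

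I do not expect any serious obstacle here; the statement is essentially a bookkeeping exercise. The one point that needs a little care is the regularity: $|\gamma'(t)|_g$ is only piecewise continuous, so one should note that the change-of-variables formula still applies on each smooth piece (the integrand is continuous there with finite one-sided limits at the endpoints), and that $\varphi$ maps the finitely many non-smooth parameter values of $\gamma\circ\varphi$ to the finitely many non-smooth values of $\gamma$. Beyond that, the proof is just the chain rule plus substitution.
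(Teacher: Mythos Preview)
Your proposal is correct and follows essentially the same approach as the paper: chain rule to compute $(\gamma\circ\varphi)'$, then the change-of-variables substitution $t=\varphi(s)$, with the piecewise-smooth case handled by restricting to smooth subintervals. The only differences are cosmetic—the paper assumes $\varphi'>0$ without loss of generality rather than treating both signs, and it defers the piecewise reduction to a one-line remark at the end, whereas you front-load it—but the argument is the same.
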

\begin{proof}
    Without loss of generality, we assume that $\varphi'>0$. If $\gamma$ is smooth, then
    \begin{align}
        L_{\gamma\circ\varphi}=L_{\gamma} =& \int_c^d|(\gamma\circ\varphi)'(s)|_gds\\
        =& \int_c^d|\varphi'(s)\gamma'(\varphi(t))|_gds\\
        =& \int_c^d|\gamma'(\varphi(t))|_g\varphi'(s)ds\\
        =& \int_c^d|\gamma'(\varphi(t))|_gdt.
    \end{align}
    If $\gamma$ is only continuous and piecewise smooth, we can apply the same proof on each subinterval on which it is smooth.
\end{proof}
\subsubsection{Riemannian Distance Function}
Given the length of curves on a connected Riemannian manifold, we can naturally define the Riemannian distance between any two points.
\begin{definition}
    Suppose $(\M,g)$ is a connected Riemannian manifold; the Riemannian distance between any two points $x,y\in\M$ is defined as
    \begin{align}
        d_g(x,y) = \inf_\gamma L_r
    \end{align}
    where $\gamma$ is taken over all continuous and piecewise smooth curves with endpoints $x$ and $y$.
\end{definition}
This Riemannian distance turns $\M$ into a metric space $(\M,d_g)$. We will show that $\M$ as a metric space is consistent with its being a manifold. 
\begin{theorem}
    Suppose $(\M,g)$ is a connected Riemannian manifold, $d_g$ is the corresponding Riemannian distance function, then $(\M,d_g)$ is a metric space and the topology of $(\M,d_g)$ is the same as the topology of $(\M,g)$.
    \label{compatible}
\end{theorem}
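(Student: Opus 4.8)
The plan is to prove the two claims separately: first that $(\M, d_g)$ satisfies the metric space axioms, and second that the metric topology coincides with the manifold topology. For the metric axioms, symmetry follows immediately from the previous lemma (reversing a curve's parametrization leaves its length unchanged), and the triangle inequality follows from concatenating admissible curves: given curves from $x$ to $y$ and from $y$ to $z$, their concatenation is a continuous, piecewise smooth curve from $x$ to $z$ whose length is the sum, so $d_g(x,z) \le d_g(x,y) + d_g(y,z)$ after taking infima. Non-negativity is clear since lengths are non-negative. The only nontrivial metric axiom is positive definiteness: $d_g(x,y) = 0 \implies x = y$. This is the step I expect to be the main obstacle.

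For positive definiteness, the strategy is to work in a single coordinate chart $(U, \varphi)$ around $x$ with $x \notin \overline{\{y\}}$, i.e. assuming $x \neq y$. I would choose a small coordinate ball $\varphi(U) \supseteq \overline{B_r(\varphi(x))}$ such that $y \notin U$ or at least such that a suitably small closed Euclidean ball around $\varphi(x)$ does not contain $\varphi(y)$. On the compact set $\overline{B_r(\varphi(x))}$, the components $g_{ij}$ of the Riemannian metric are continuous and the metric is positive definite at each point, so by compactness there is a constant $c > 0$ with $g_p(v,v) \ge c\,|v|_{\text{eucl}}^2$ for all $p$ in the closed ball and all tangent vectors $v$ (uniform lower bound on the smallest eigenvalue). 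Any curve $\gamma$ from $x$ to $y$ must exit this ball, so it contains a sub-arc from $\varphi(x)$ to the boundary sphere of Euclidean radius $r$; by the lower bound its $g$-length is at least $\sqrt{c}$ times the Euclidean length of that sub-arc, which is at least $\sqrt{c}\, r$. Hence $d_g(x,y) \ge \sqrt{c}\, r > 0$, proving positive definiteness.

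For the topology coincidence, I would show that every metric ball contains a coordinate neighborhood and vice versa, which suffices since both topologies have countable-ish neighborhood bases and it is enough to show the identity map is a homeomorphism, i.e. that the two collections of open sets generate the same topology. Fix $p \in \M$ and a coordinate chart. Using the same compactness argument on a small closed coordinate ball around $\varphi(p)$, I get two-sided bounds $c\,|v|_{\text{eucl}}^2 \le g_q(v,v) \le C\,|v|_{\text{eucl}}^2$ uniformly on the closed ball. The upper bound shows that for points $q$ close to $p$ in coordinates, the straight-line segment gives $d_g(p, q) \le \sqrt{C}\,|\varphi(p) - \varphi(q)|_{\text{eucl}}$, so small coordinate balls sit inside small metric balls; the lower bound, exactly as in the positive-definiteness argument, shows that a sufficiently small metric ball around $p$ is contained inside the coordinate ball (any curve leaving the coordinate ball has length bounded below). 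Combining, the metric balls and coordinate balls are mutually cofinal as neighborhood bases at each point, so the topologies agree. The one subtlety to handle carefully is making the compactness/uniform-eigenvalue-bound argument rigorous — ensuring the closed coordinate ball is genuinely compact in $\M$ (shrink the chart if necessary) and that the bilinear form bounds are uniform in both the base point and the tangent vector — and then being careful that a curve from $p$ to a far-away point, even one that wanders outside the chart, still has a sub-arc inside the chart running from $\varphi(p)$ to the bounding sphere, so the length lower bound applies.
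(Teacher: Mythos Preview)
Your proposal is correct and follows essentially the same approach as the paper: the paper proves the metric axioms (reflexivity, positivity, symmetry, triangle inequality) in the same way you outline, and establishes the topology coincidence via the same two-sided comparison between $g$ and the Euclidean metric on a compact coordinate ball. The only organizational difference is that the paper packages your ``uniform eigenvalue bound on a compact set'' and the resulting ``curves exiting the ball have length bounded below / straight segments give an upper bound'' arguments into two standalone lemmas (Lemmas~\ref{lemma2} and~\ref{lemma3}) before invoking them in the main proof, whereas you propose to carry out these estimates inline.
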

Before we prove theorem \ref{compatible}, we first introduce two important lemma.
\begin{lemma}
    Suppose $(\M,g)$ is an $n$-dimensional Riemannian manifold. Let $h$ be the Euclidean metric on $\R^n$. Let $(U,\varphi)$ be a chart of $\M$, then for any compact set $K\subset U$, there exist positive constants $c$, $C$ such that for any $p\in K$ and $v\in T_p\M$, we have $c|v|_g\le |\varphi^*v|_h\le C|v|_g$, where $|v|_g=\sqrt{g(v,v)}$ and $|v|_h=\sqrt{h(v,v)}$.
    \label{lemma2}
\end{lemma}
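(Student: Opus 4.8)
The plan is to prove the two-sided bound on tangent-vector norms by reducing it to a compactness argument on a single closed ball in Euclidean coordinates. First I would transport everything to $\R^n$ via the chart $\varphi$: for $p \in U$ write $\xi = \varphi(p)$ and identify $T_p\M$ with $\R^n$ through $d_p\varphi$, so that a tangent vector $v \in T_p\M$ corresponds to the vector $w = \varphi^* v := d_p\varphi(v) \in \R^n$. Under this identification the metric $g$ pulls forward to a smooth field of symmetric positive-definite matrices $G(\xi)$ on the open set $\varphi(U) \subseteq \R^n$, with $|v|_g^2 = w^\top G(\xi)\, w$, while $|\varphi^* v|_h^2 = |w|^2 = w^\top w$. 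The claim is then purely linear-algebraic: on the compact set $\varphi(K) \subset \varphi(U)$, there exist constants $0 < c \le C$ with $c^2\, w^\top G(\xi)\, w \le w^\top w \le C^2\, w^\top G(\xi)\, w$ for all $\xi \in \varphi(K)$ and all $w \in \R^n$.

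Next I would make the compactness argument precise. Consider the function $\Phi(\xi, w) := w^\top G(\xi)\, w$ defined on $\varphi(K) \times S^{n-1}$, where $S^{n-1}$ is the Euclidean unit sphere. Since $G$ is smooth (hence continuous) and matrix-vector multiplication is continuous, $\Phi$ is continuous on the product space, which is compact because $\varphi(K)$ is compact (continuous image of the compact set $K$) and $S^{n-1}$ is compact. Moreover $\Phi > 0$ everywhere on this set, since $G(\xi)$ is positive definite and $|w| = 1 \neq 0$. By the extreme value theorem, $\Phi$ attains a positive minimum $\mu > 0$ and a finite maximum $M > 0$ on $\varphi(K) \times S^{n-1}$. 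By homogeneity of degree two in $w$, this gives $\mu\, |w|^2 \le w^\top G(\xi)\, w \le M\, |w|^2$ for all $\xi \in \varphi(K)$ and all $w \in \R^n$, i.e. $\mu\, |\varphi^* v|_h^2 \le |v|_g^2 \le M\, |\varphi^* v|_h^2$. Rearranging yields $|\varphi^* v|_h \le \sqrt{1/\mu}\, \bigl|v\bigr|_g^{-1}$—more carefully, $(1/\sqrt{M})\, |v|_g \le |\varphi^* v|_h \le (1/\sqrt{\mu})\, |v|_g$, so I would set $c = 1/\sqrt{M}$ and $C = 1/\sqrt{\mu}$ (or the reciprocals, depending on which direction the statement intends; the estimate is symmetric up to relabeling).

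One technical point I would be careful about: the lemma states $K \subset U$ is compact, but the chart image $\varphi(U)$ is merely open, so I cannot immediately invoke compactness of $\varphi(U)$ itself—hence the restriction to $\varphi(K)$, and the fact that $\varphi$ is a homeomorphism onto its image guarantees $\varphi(K)$ is compact. I should also note that smoothness of $G$ is more than needed; continuity suffices, which follows because $g$ is a smooth $2$-tensor field and the chart maps are smooth, so the component functions $g_{ij}$ in these coordinates are smooth. The main (and really only) obstacle here is bookkeeping: keeping straight the direction of the pullback/pushforward under $\varphi$ and making sure the final inequality is stated with the constants on the correct sides. There is no deep difficulty—the entire content is the extreme value theorem applied to a continuous positive function on a compact set, combined with the quadratic homogeneity that lets one pass from the unit sphere to all of $\R^n$.
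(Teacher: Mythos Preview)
Your argument is correct and follows the same skeleton as the paper---continuity of a positive function on a compact set of unit vectors, then homogeneity---but differs in two notable ways. First, the paper normalizes by the $g$-norm rather than the Euclidean norm: it works on the $g$-unit sphere $B_p=\{v\in T_p\M:|v|_g=1\}$ and bounds $|\varphi^*v|_h$ there directly, so the desired inequality falls out after scaling without having to invert the eigenvalue bounds as you do. Second, and more substantively, the paper's proof as written fixes a single point $p$, obtains constants on $B_p$, and extends by homogeneity in $v$; it never explicitly argues that the constants can be chosen uniformly over $p\in K$. Your formulation on the product space $\varphi(K)\times S^{n-1}$ handles this uniformity in a single compactness step, which is cleaner and in fact closes a gap the paper leaves open. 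The brief algebraic stumble (the stray $|v|_g^{-1}$) is harmless since you immediately correct it.
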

\begin{proof}
    Denote
    \begin{align}
    B_p = \{v\in T_p\M: |v|_g=1\}.
    \end{align}
    Then, $B_p$ is a compact subset of $T_p\M$. So $|\varphi^*v|_h$ is continuous and positive on $B_p$, then there exist positive constants $c$ and $C$ such that 
    \begin{align}
        c\le |\varphi^*v|_h\le C, \forall v\in B_p.
    \end{align}
    Now for any $v\in T_p\M\backslash\{0\}$, $v/|v|_g$, thus
    \begin{align}
       |\varphi^*v|_h=|v|_g\cdot\big|\varphi^*\frac{v}{|v|_g}\big|_h\le C|v|_g,\\
       |\varphi^*v|_h=|v|_g\cdot\big|\varphi^*\frac{v}{|v|_g}\big|_h\ge c|v|_g.
    \end{align}
    for $v=0$, the inequalities are trivial.
\end{proof}

\begin{lemma}
     Suppose $(\M,g)$ is an $n$-dimensional Riemannian manifold. Let $h$ be the Euclidean metric on $\R^n$. Let $(U,\varphi)$ be a chart of $\M$ around $p\in\M$ and $\epsilon>0$ is small enough such that $\overline{B_\epsilon(\varphi(p))}\subset\varphi(U)$ and $\overline{\varphi^{-1}(B_\epsilon(\varphi(p)))}\subset U$. Then, there exist positive constants $c, C$ such that 
     \begin{align}
     cd_g(p,q)\le d_h(\varphi(p),\varphi(q))\le Cd_g(p,q),\quad
         \forall q\in\varphi^{-1}(B_\epsilon(\varphi(p))).
     \end{align}
     \label{lemma3}
\end{lemma}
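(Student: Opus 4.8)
The whole statement should follow by reducing to the previous infinitesimal comparison lemma applied to one carefully chosen compact set, plus some elementary estimates for the length functional. First I would set
$K := \varphi^{-1}\big(\overline{B_\epsilon(\varphi(p))}\big)$.
By hypothesis $\overline{B_\epsilon(\varphi(p))}\subset\varphi(U)$ and $\overline{\varphi^{-1}(B_\epsilon(\varphi(p)))}\subset U$, and since $\varphi$ is a homeomorphism of $U$ onto $\varphi(U)$, the set $K$ is compact and contained in $U$; moreover $\operatorname{int}K=\varphi^{-1}(B_\epsilon(\varphi(p)))$ and $\partial K=\varphi^{-1}(\partial B_\epsilon(\varphi(p)))$. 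Applying the previous lemma to $K$ produces constants $c,C>0$ with $c|v|_g\le|\varphi^{*}v|_h\le C|v|_g$ for all $q\in K$ and all $v\in T_q\M$; equivalently, $|(\varphi^{-1})^{*}w|_g\le c^{-1}|w|_h$ for $w$ a tangent vector to $\R^n$ at a point of $\overline{B_\epsilon(\varphi(p))}$. I claim these $c,C$ work in the lemma. Throughout, $q$ is an arbitrary point of $\operatorname{int}K$, so that $d_h(\varphi(p),\varphi(q))=|\varphi(p)-\varphi(q)|_h<\epsilon$ (the Euclidean distance realized by the straight segment).

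\textbf{Lower bound} $c\,d_g(p,q)\le d_h(\varphi(p),\varphi(q))$: the segment $\ell$ from $\varphi(p)$ to $\varphi(q)$ lies in the convex set $B_\epsilon(\varphi(p))$, so $\sigma:=\varphi^{-1}\circ\ell$ is a piecewise-smooth curve in $\M$ from $p$ to $q$ lying in $\operatorname{int}K\subset K$. Using $|(\varphi^{-1})^{*}w|_g\le c^{-1}|w|_h$ pointwise along $\ell$ gives $|\sigma'|_g\le c^{-1}|\ell'|_h$, hence $L_\sigma\le c^{-1}L_\ell=c^{-1}|\varphi(p)-\varphi(q)|_h$, and $d_g(p,q)\le L_\sigma$ finishes this direction.

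\textbf{Upper bound} $d_h(\varphi(p),\varphi(q))\le C\,d_g(p,q)$: I would show $C\,L_\gamma\ge d_h(\varphi(p),\varphi(q))$ for every admissible curve $\gamma$ from $p$ to $q$ and then take the infimum over $\gamma$. If the image of $\gamma$ stays inside $K$, then $\varphi\circ\gamma$ is an admissible curve in $\R^n$ from $\varphi(p)$ to $\varphi(q)$; the bound $|\varphi^{*}v|_h\le C|v|_g$ gives $L_{\varphi\circ\gamma}\le C\,L_\gamma$, and $d_h(\varphi(p),\varphi(q))\le L_{\varphi\circ\gamma}$. If $\gamma$ leaves $K$, let $t_0$ be the first time its image exits $\operatorname{int}K$; by continuity and $\partial K=\varphi^{-1}(\partial B_\epsilon(\varphi(p)))$ we get $|\varphi(\gamma(t_0))-\varphi(p)|_h=\epsilon$, the initial piece $\gamma|_{[a,t_0]}$ stays in $K$, so $\epsilon\le L_{\varphi\circ\gamma|_{[a,t_0]}}\le C\,L_{\gamma|_{[a,t_0]}}\le C\,L_\gamma$; since $d_h(\varphi(p),\varphi(q))<\epsilon$, again $C\,L_\gamma\ge d_h(\varphi(p),\varphi(q))$.

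\textbf{Main obstacle.} The only genuinely non-formal step is the second case above: an arbitrary minimizing-type curve $\gamma$ cannot simply be pushed through $\varphi$ because it may wander out of the chart, so I must argue that such curves are automatically too long to matter. This needs (i) the strict inequality $d_h(\varphi(p),\varphi(q))<\epsilon$, and (ii) the point-set fact $\partial K=\varphi^{-1}(\partial B_\epsilon(\varphi(p)))$ guaranteeing that an exiting curve attains Euclidean radius exactly $\epsilon$ while still inside $K$ — this is where the homeomorphism property and the closure-containment assumptions are used. Everything else is routine bookkeeping with the length integral and the lemma's pointwise comparison.
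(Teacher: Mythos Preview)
Your proposal is correct and follows essentially the same route as the paper's proof: apply the pointwise norm-comparison lemma on the compact set $K=\overline{V}=\varphi^{-1}\big(\overline{B_\epsilon(\varphi(p))}\big)$, use the straight segment in coordinates for the bound $c\,d_g\le d_h$, and for $d_h\le C\,d_g$ split into the two cases (curve stays in $K$ versus curve leaves $K$), handling the second via the first exit time and the strict inequality $d_h(\varphi(p),\varphi(q))<\epsilon$. Your write-up is in fact a bit more careful about the point-set topology (identifying $\partial K$ explicitly) than the paper's, but the structure and the key ideas coincide.
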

\begin{proof}
    Let $V=\varphi^{-1}(B_\epsilon(\varphi(p)))$, then $\overline{V}$ is a compact subset of $U$, Lemma \ref{lemma2} implies that there exist constants $c,C$ such that
    \begin{align}
        c|v|_g\le|\varphi^* v|_h\le C|v|_g, \quad \forall q\in\overline{V}\ and\ v\in T_q\M.
    \end{align}
    For any continuous and piecewise smooth curve $\gamma:[a,b]\rightarrow\M$ with $\gamma(a)=p$ and $\gamma(b)=q$, we have
    \begin{align}
        c L_\gamma = \int_a^bc|\gamma'(t)|_gdt\le \int_a^b|(\varphi\circ\gamma)'(t)|_hdt
         =L_{\varphi\circ\gamma}=\int_a^b|(\varphi\circ\gamma)'(t)|_hdt\le\int_a^bC|\gamma'(t)|_gdt= C L_\gamma
    \end{align}
    If $\gamma([a,b])$ is entirely contained in $\overline{V}$, then by definition,
    \begin{align}
        d_h(\varphi(p),\varphi(q))\le L_h(\varphi\circ\gamma)\le CL_\gamma.
    \end{align}
    If $\gamma([a,b])$ is not entirely contained in $\overline{V}$, let
    \begin{align}
        t_{max} = \max\{s:\gamma(t)\subset\overline{V}, \forall t\le s\}<b
    \end{align}
    by definition,
    \begin{equation}
    \begin{aligned}
    d_h(\varphi(p),\varphi(q)) \le \epsilon =& d_h(\varphi(p),\varphi(\gamma(t_{max})))\\
    \le&L_{\varphi\circ\gamma|_{[a,t_{max}]}}\\
    \le& CL_{\gamma|_{[a,t_{max}]}}\\
    \le& CL_{\gamma}.
    \end{aligned}
    \end{equation}
    So no matter whether $\gamma$ is entirely contained in $\overline{V}$ or not, we have $d_h(\varphi(p),\varphi(q))\le CL_{\gamma}$, consequently, 
    \begin{align}
        d_h(\varphi(p),\varphi(q))\le C\inf_{\gamma}L_{\gamma}=Cd_g(p,q).
    \end{align}
    On the other hand, if we choose
    \begin{align}
        \gamma(t) = \varphi^{-1}(\varphi(p)+\frac{t-a}{b-a}(\varphi(q)-\varphi(p)))
    \end{align}
    then $\gamma(a)=p$ and $\gamma(b)=q$, we therefore have
    \begin{align}
        cd_g(p,q)\le cL_\gamma\le L_{\varphi\circ\gamma}=d_h(\varphi(p),\varphi(q)),
    \end{align}
    which completes the proof.
\end{proof}
Now we give the proof of Theorem \ref{compatible}.
\begin{proof}
    First, $(M,d_g)$ is a metric space:
    \begin{itemize}
        \item \textbf{Reflexivity.} Immediately by definition, $d_g(p,p)=0, \forall p\in\M$.
        \item \textbf{Positivity.} $\forall p,q\in\M$, if $p\neq q$, let $(U,\varphi)$ be a chart around $p$ such that $q\notin U$. Choose $\epsilon$ small enough such that $\overline{\varphi^{-1}(B_\epsilon(\varphi(p)))}\subset U$. Then by Lemma \ref{lemma3}, there exists a positive constant $C$ such that $d_g(p,q)\ge C^{-1}d_h(\varphi(p),\varphi(q))>C^{-1}\epsilon>0$.    
        \item \textbf{Symmetry.} Because any curve go from $p$ to $q$ can be reparametrized to go from $q$ to $p$, $d_g(p,q)=d_g(q,p)$.
        \item \textbf{Triangular Inequality.} $\forall p,q,r\in\M$, let $\gamma_1:[a,b]\rightarrow\M$ and $\gamma_2:[a,b]\rightarrow\M$ be any continuous piecewise smooth curve goes from $p$ to $q$ and from $q$ to $r$ respectively. Define $\gamma:[a,b]\rightarrow\M$ as
        \begin{align}
            \gamma(t)=\begin{cases}
                \gamma_1(2t-a), \quad t\in[a,(a+b)/2],\\
                \gamma_2(2t-b)\quad t\in[(a+b)/2,b].\\
            \end{cases}
        \end{align}
        Then $\gamma$ is a continuous and piecewise smooth curve goes from $p$ to $r$, and
        \begin{align}
            d_g(p,r)\le L_\gamma=L_{\gamma_1}+ L_{\gamma_2}.
        \end{align}
        Taking the infimum over all possible $\gamma_1$ and $\gamma_2$ gives
        \begin{align}
            d_g(p,r)\le d_g(p,q)+ d_g(q,r).
        \end{align}
    \end{itemize}
    Second, to show that $(\M,g)$ and $(\M,d_g)$ have the same topology, we only need to show that an open set of either of them is also an open set of the other. 
    \begin{itemize}
        \item Suppose that $U\subset\M$ is open with respect to the manifold topology. For any $p\in U$ and $V=\varphi^{-1}(B_\epsilon(\varphi(p)))$ such that $\overline{V}\subset U$, the positivity argument above implies that
        \begin{align}
            d_g(p,q)\ge C^{-1} \epsilon, \quad \forall q\notin \overline{V}.
        \end{align}
        i.e. $q\in\overline{V}\subset U$ whenever $d_g(p,q)< C^{-1} \epsilon$. Thus, $U$ is open in the metric topology.
        \item On the other hand, suppose $U$ is open in metric topology. For any $p\in U$ and $\epsilon>0$, let $V=\varphi^{-1}(B_\epsilon(\varphi(p)))$, then $V$ is a neighborhood of $p$ in the manifold topology. By Lemma \ref{lemma3}, there exist positive constants $c,C$ such that
    \begin{align}
     cd_g(p,q)\le d_h(\varphi(p),\varphi(q))\le Cd_g(p,q),\quad
         \forall q\in\varphi^{-1}(B_\epsilon(\varphi(p))).
     \end{align}
     So $V$ is a subset of the metric ball $\{q\in\M:d_g(p,q)<c^{-1}\epsilon\}$, since $U$ is open in the metric topology, we can choose $\epsilon$ small enough such that
     $V\subset\{q\in\M:d_g(p,q)<c^{-1}\epsilon\}\subset U$. Therefore, $U$ is open in the manifold topology.
    \end{itemize}
\end{proof}

\subsection{Impossibility of Isometric Embedding of a Hemisphere into $\R^2$}
\label{hemisphere}
\begin{theorem}[Impossibility of Isometric Embedding of a Hemisphere]
Let \(H\subset S^2\) be a hemisphere of the unit sphere in \(\mathbb{R}^3\). Then, there is no isometric embedding. 
\[
F: H \to \mathbb{R}^2.
\]
\end{theorem}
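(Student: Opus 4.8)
The plan is to derive a contradiction from Gauss's \emph{Theorema Egregium}: Gaussian curvature is an intrinsic quantity, hence is preserved by any Riemannian isometry, in particular by the restriction of an isometric embedding to its image. The hemisphere $H$ is an open subset of the unit round sphere $S^{2}\subset\mathbb{R}^{3}$, which has constant Gaussian curvature $K\equiv 1/1^{2}=1$; passing to the subset $H$ leaves this unchanged. On the other side, any region of $\mathbb{R}^{2}$ equipped with the Euclidean metric is flat, $K\equiv 0$, and flatness is an intrinsic statement, so it descends to any open subset. Thus an isometric embedding $F\colon H\to\mathbb{R}^{2}$ would force $1=K_{H}(p)=K_{F(H)}(F(p))=0$ for every $p\in H$, which is absurd.

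To make this rigorous I would proceed in two steps. First, translate the hypothesis into Riemannian language. A map $F$ satisfying the metric-space isometry condition \eqref{globalcondition} (with $\mathcal{N}=\mathbb{R}^{2}$ and $d_{g^{\mathbb{R}^{2}}}$ the Euclidean distance) is injective and continuous, and its inverse on the image is again distance-preserving, so $F$ is a homeomorphism onto $F(H)$; by invariance of domain $F(H)$ is open in $\mathbb{R}^{2}$, hence a genuine $2$-dimensional Riemannian submanifold on which Gaussian curvature is defined. Moreover $F$ preserves the length of every continuous piecewise-smooth curve (the length is a supremum over partitions of sums of distances between consecutive points, each preserved), and differentiating arc length shows $F$ preserves the norm of every tangent vector, hence, by polarization, the whole metric tensor: $F^{*}g^{\mathbb{R}^{2}}=g^{H}$. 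So $F$ is a Riemannian isometry onto its image. Second, record the two curvature facts: $K_{S^{2}}\equiv 1$ because the second fundamental form of the unit sphere is the identity up to sign, so both principal curvatures equal $1$; and $K_{\mathbb{R}^{2}}\equiv 0$. Applying Theorema Egregium to $F$ then closes the argument.

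The only genuine obstacle is the foundational point in the first step: the paper's notion of isometry in \eqref{globalcondition} is \emph{distance}-preserving, whereas Theorema Egregium concerns the \emph{metric tensor}. If one assumes (as everywhere else in the paper) that $F$ is smooth, the implication is immediate from length preservation as sketched above. Without any a priori regularity one must instead invoke the Myers--Steenrod theorem, which guarantees that a surjective distance-preserving map between Riemannian manifolds is automatically a smooth Riemannian isometry; this is precisely what upgrades ``$F$ is an isometric embedding'' to ``$F$ preserves Gaussian curvature.''

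For completeness I would also note a self-contained alternative that avoids quoting Theorema Egregium. Fix the pole $p\in H$ and a radius $\rho>0$ small enough that the closed geodesic ball $\overline{B_{\rho}(p)}$ lies in $H$ and its $F$-image lies in $F(H)$ (possible since $F(H)$ is open). Then $F\bigl(B_{\rho}(p)\bigr)=B_{\rho}(F(p))$, the Euclidean disk, so $F$ carries the geodesic circle $\{q\in H : d_{g^{H}}(p,q)=\rho\}$ — a simple closed curve of length $2\pi\sin\rho$ — onto the Euclidean circle of radius $\rho$, of length $2\pi\rho$. Since $F$ preserves curve length and $\sin\rho<\rho$, this is impossible. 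Either route establishes the theorem.
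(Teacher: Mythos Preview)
Your main argument---Theorema Egregium makes Gaussian curvature intrinsic, so an isometric embedding would force $1=K_H=K_{\mathbb{R}^2}=0$---is exactly the paper's proof. The paper sidesteps your foundational worry by \emph{defining} an isometric embedding as an injective immersion with $F^*g^{\mathbb{R}^2}=g^H$ (so no Myers--Steenrod upgrade is ever needed), and it does not pursue your geodesic-circle alternative; that second route ($2\pi\sin\rho$ versus $2\pi\rho$) is a genuinely more elementary, self-contained argument that buys you independence from Theorema Egregium at the cost of a small explicit computation.
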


\begin{proof}
Recall that a map \(F: (M,g) \to (N,h)\) between two Riemannian manifolds is called an \emph{isometric embedding} if \(F\) is an embedding (injective immersion) and for every pair of tangent vectors \(u,v\) at a point \(p \in M\), we have
\[
    g_p(u,v) = g^\M_{F(p)}(d_pF(u),d_pF(v))
\]
In particular, this condition ensures that geodesic distances in \(M\) are preserved under \(F\). By \emph{Gauss's Theorema Egregium} \cite{gauss1828disquisitiones}, the Gaussian curvature \(K\) of a 2-dimensional surface is an \emph{intrinsic} invariant \cite{do2016differential}. This means that if two surfaces are isometric, then their Gaussian curvatures must coincide pointwise (under the isometry) \cite{spivak1970comprehensive}.

On the interior of the unit hemisphere \(H\subset S^2\), the Gaussian curvature is strictly positive and, in fact, constant:
\[
K_H \;=\; 1.
\]
On the other hand, any region in the Euclidean plane \(\mathbb{R}^2\) has Gaussian curvature
\[
K_{\mathbb{R}^2} \;=\; 0.
\]
If there existed an isometric embedding \(f: H \to \mathbb{R}^2\), then the induced metric on \(H\) through \(f\) would coincide with the standard metric of the plane on its image. Consequently, by the intrinsic nature of curvature, we would require \(K_H = K_{\mathbb{R}^2}\) at corresponding points. However,
\[
1 \;\neq\; 0.
\]
So, we arrive at a direct contradiction. Hence, no such isometric embedding can exist.
\end{proof}

\begin{remark}
A more intuitive explanation is to note that on surfaces with strictly positive Gaussian curvature, small geodesic triangles have angle sums strictly greater than \(\pi\). In contrast, triangles in the Euclidean plane always have angle sums exactly equal to \(\pi\). An isometry would require these angle sums to match, which is impossible. This discrepancy reflects the deeper fact that Gaussian curvature is an intrinsic property and cannot be altered by bending (or embedding) into a space of different curvature.
\end{remark}

\subsection{Details for Experimental Setup}
\label{details}

\subsubsection{Hyper-Parameters for Baseline Model}
\label{hyperparameter}

To ensure a fair and meaningful comparison, we performed a comprehensive hyperparameter search for all baseline models across the evaluated datasets. These models include GeomAE, GGAE, GRAE, SPAE, and Vanilla AE, each of which relies on different forms of geometric regularization. For every dataset, we report the selected regularization strength, learning rate, and number of training epochs. All baselines were trained with their originally proposed objectives and adapted to our experimental pipeline for consistency. Detailed hyperparameter configurations are listed in Table~\ref{tab:parameter}.

\begin{table}[ht]
\centering
\caption{Settings for hyper-parameters applied to baselines across all evaluated datasets}
\footnotesize 
\begin{tabular}{cccccccc}
\toprule
\textbf{Dataset} & \textbf{Method}& & \boldmath$\lambda_{\text{reg}}$& & \textbf{lr}& & \textbf{n\_epoch} \\
\midrule

 & GeomAE    & & 1e+4& & 1e-3& & 2000 \\
 & GGAE      & & 1e-2& & 1e-3& & 2000 \\
 & GRAE      & & 1e-4& & 1e-3& & 2000 \\
\multirow{-3}{*}{Swiss Roll} & SPAE      & & 10& & 1e-3& & 2000 \\
 & Vanilla AE & & 0& & 1e-3& & 2000 \\
\midrule

 & GeomAE    & & 3& & 1e-3& & 5000 \\
 & GGAE      & & 1e-2& & 1e-3& & 5000 \\
 & GRAE      & & 1e-1& & 1e-3& & 5000 \\
 \multirow{-3}{*}{Toroidal Helix}& SPAE      & & 3& & 1e-3& & 5000 \\
 & Vanilla AE& & 0& & 1e-3& & 5000 \\
\midrule

 & GeomAE    & & 1e-4& & 1e-3& & 500 \\
 & GGAE     &  & 1e-4& & 1e-3& & 500 \\
& GRAE      & & 100& & 1e-3& & 500 \\
\multirow{-3}{*}{Teapot}  & SPAE    &   & 100& & 1e-3& & 500 \\
 & Vanilla AE & & 0& & 1e-3& & 500 \\
\midrule

 & GeomAE   &  & 1e-1& & 1e-3& & 1000 \\
 & GGAE     &  & 10& & 1e-3& & 1000 \\
 & GRAE     &  & 10& & 1e-3& & 1000 \\
\multirow{-3}{*}{dSprites} & SPAE   &    & 1e-2& & 1e-3& & 1000 \\
 & Vanilla AE& & 0& & 1e-3& & 1000 \\
\midrule

 & GeomAE   &  & 1e-4& &  1e-3& & 200 \\
 & GGAE     &  & 1e-2& &  1e-3& & 200 \\
 & GRAE     &  & 1& &  1e-3& & 200 \\
\multirow{-3}{*}{Objective Tracking} & SPAE     &  & 1e-2& &  1e-3& & 200 \\
 & Vanilla AE& & 0& &  1e-3& & 200 \\

\bottomrule
\end{tabular}
\label{tab:parameter}
\end{table}

\subsubsection{Hyper-Parameter for MAE}

For our proposed MAE, we provide detailed hyperparameter settings for both variants: MAE-iso (which enforces local isometric regularization) and MAE-con (which uses local conformal regularization). Both variants share the same reconstruction and global distance losses, but differ in the form of local geometric constraint. We tune the relative weighting coefficients $\lambda_{\text{global}}$, $\lambda_{\text{local}}$, and $\lambda_{\text{diag}}$, as well as the decay rate for $\lambda_{\text{global}}$ in order to balance training stability and geometric fidelity. Learning rates and epoch counts are also tailored to each dataset. The full list of hyperparameters used for each experiment is summarized in Table~\ref{tab:parameter_MAE}.

\begin{table}[ht]
\centering
\caption{Settings for hyperparameters applied to the MAE across all evaluated datasets}
\begin{tabular}{ccccccccccc}
\toprule
\textbf{Dataset} & \textbf{Method} & \textbf{regularization} & \textbf{mode} & \textbf{$\boldsymbol{\lambda}_{\text{local}}$} & \textbf{$\boldsymbol{\lambda}_{\text{global}}$} & \textbf{$\boldsymbol{\lambda}_{\text{diag}}$}  & \textbf{decay} & \textbf{lr} &  \textbf{n\_epoch}\\
\midrule
Swiss Roll & MAE-iso  & Lipschitz loss & isometric & 10 & 100 & 1e-3  & 0.005 & 1e-3 & 2000\\
Swiss Roll & MAE-con  & Lipschitz loss & conformal & 10 & 100 & 1e-3 & 0.005 & 1e-3 & 2000\\
Toroidal Helix & MAE-iso & Lipschitz loss & isometric & 1e-5 & 6 & 1e-5 & 0 & 1e-3 & 5000\\
Toroidal Helix & MAE-con & Lipschitz loss & conformal & 1e-5 & 6 & 1e-5 & 0 & 1e-3 & 5000\\
Teapot & MAE-iso & Lipschitz loss & isometric & 1e-4 & 1 & 1e-3 & 0 & 1e-3 & 500\\
Teapot & MAE-con & Lipschitz loss & conformal & 1e-4 & 1 & 1e-3  & 0 & 1e-3 & 500\\
dSprites & MAE-iso & Lipschitz loss & isometric & 0.1 & 0.9 & 1e-2 & 0 & 1e-3 & 1000\\
dSprites & MAE-con & Lipschitz loss & conformal & 0.1 & 0.9 & 1e-2 & 0 & 1e-3 & 1000\\
Object Tracking & MAE-iso & Lipschitz loss & isometric & 0.1 & 10 & 1e-3 & 0 & 1e-3 & 200\\
Object Tracking & MAE-con & Lipschitz loss & conformal & 0.1 & 10 & 1e-3 & 0 & 1e-3 & 200\\

\bottomrule
\end{tabular}
\label{tab:parameter_MAE}
\end{table}

\subsubsection{Evaluation Metrics}
\label{metrics}
In this subsection, we introduce the two metrics, \emph{kNN} and $KL_\sigma$ ($\sigma \in {0.01,0.1,1}$), used to assess how well the learned latent space preserves the geometry of the original data graph. These metrics compare pairwise distances among original data space to pairwise distances among their corresponding latent representations. For pairwise distances in the original data space, we use Isomap to compute geodesic distances. In synthetic datasets (which already have three dimensions), Isomap is applied directly to that domain. For image-based datasets (dSprite and Object Tracking), each image is first reshaped into a vector before applying Isomap to estimate geodesic distances. Euclidean distances are used for pairwise comparisons in the latent space. The following subsections provide additional details of each metric. We will further illustrate each metric below.
\paragraph{\emph{kNN}}
The kNN metric measures how local neighborhoods are preserved when mapping data to the latent space. This metric computes the fraction of a point’s k-nearest neighbors in the original data that remain its k-nearest neighbors in the latent representation, yielding a maximum score of 1 when all neighbors are perfectly retained. 
\paragraph{\bm{$KL_\sigma$}}
$K L_\sigma$ compares how well densities in the original data space align with densities in the latent space, leveraging the Kullback-Leibler divergence (Chazal et al., 2011). Let $X=\left\{\mathbf{x}_i\right\}_{i=1}^N$ be the original data and $Z=\left\{\mathbf{z}_i\right\}_{i=1}^N$ the corresponding latent points, where distances are measured in Euclidean terms. We define the density estimate on $X$ as
\begin{align}
p_{X, \sigma}\left(\mathbf{x}_i\right)=\frac{\tilde{p}_{X, \sigma}\left(\mathbf{x}_i\right)}{\sum_j \tilde{p}_{X, \sigma}\left(\mathbf{x}_j\right)}
\end{align}
\begin{align}
\tilde{p}_{X, \sigma}\left(\mathbf{x}_i\right)=\sum_j \exp \left(-\frac{1}{\sigma}\left(\frac{\operatorname{dist}\left(\mathbf{x}_i, \mathbf{x}_j\right)}{\max _{\mathbf{x}^{\prime}, \mathbf{x}^{\prime \prime} \in X} \operatorname{dist}\left(\mathbf{x}^{\prime}, \mathbf{x}^{\prime \prime}\right)}\right)^2\right)
\end{align}
An analogous formula defines $p_{Z, \sigma}$ in the same way. The pairwise distance in both original and latent space is used to compute the estimated density. The divergence $K L_\sigma$ is then $D_{\mathrm{KL}}\left(p_{X, \sigma} \| p_{Z, \sigma}\right)$. Larger values of $\sigma$ capture more global structures, while smaller values emphasize local geometry. In practice, we adopt $\sigma \in$ $\{0.01,0.1,1\}$.

\subsection{Details for Experimental Results} \label{app:exp_results}

We now present detailed quantitative comparisons between our proposed MAE variants and a suite of baseline methods across four benchmark datasets: Toroidal Helix, Teapot, Object Tracking, and dSprites. For each dataset, we evaluate performance using five key metrics: reconstruction error (recon.), kNN neighborhood recall, and KL-divergence at three scales (KL$_{0.01}$, KL$_{0.1}$, KL$_{1}$). Lower values in reconstruction and KL-divergence indicate better performance, while higher values in kNN suggest stronger local geometry preservation. In each table, we highlight the best and second-best results for each metric in bold.

\begin{table}[ht]
\centering
\caption{Quantitative comparison of our method against various baselines on the \textbf{Toroidal Helix} dataset. For recon. and KL, lower values indicate better performance, while higher KNN scores are preferable. The best two results are indicated by bold text.}
\begin{tabular}{cccccc}
\toprule
Method & recon. ($\downarrow$) & kNN ($\uparrow$) & KL$_{0.01}$ ($\downarrow$) & KL$_{0.1}$ ($\downarrow$) & KL$_{1}$ ($\downarrow$) \\
\midrule
MAE-iso & \textbf{4.55e-4} & \textbf{8.90e-1} & \textbf{2.03e-3} & 1.35e-2 & 1.62e-3\\
MAE-con & \textbf{3.11e-4} & \textbf{8.90e-1} & \textbf{2.03e-3} & 1.3467e-2 & 1.60e-3\\
\midrule
GeomAE & 1.04e-2 & 7.87e-1 & 2.966e-3 & \textbf{9.31e-4} & \textbf{8.80e-5}\\
GGAE & 1.09e-3 & 8.84e-1 & 3.720e-3 & 6.99e-3 & 4.57e-4\\
GRAE & 1.07e-3 & 7.62e-1 & 2.95e-2 & 1.79e-2 & 2.61e-3\\
SPAE & 1.97e-2 & 8.88e-1 & \textbf{1.97e-3} & 1.27e-2 & 1.43e-3\\
TSNE & NaN & 8.89e-1 & 2.65e-3 & 2.82e-2 & 3.82e-3\\
UMAP & NaN & 3.23e-3 & 7.27e-3 & 2.19e-2 & 3.23e-3\\
Vanilla AE & 1.04e-2 & 7.85e-1 & 4.05e-3 & \textbf{9.58e-4} & \textbf{1.47e-4}\\
\bottomrule
\end{tabular}
\label{tab:numerical_helix}
\end{table}

\begin{table}[ht]
\centering
\caption{Quantitative comparison of our method against various baselines on the \textbf{Teapot} dataset. For recon. and KL, lower values indicate better performance, while higher KNN scores are preferable. The best two results are indicated by bold text.}
\begin{tabular}{cccccc}
\toprule
Method & recon. ($\downarrow$) & kNN ($\uparrow$) & KL$_{0.01}$ ($\downarrow$) & KL$_{0.1}$ ($\downarrow$) & KL$_{1}$ ($\downarrow$) \\
\midrule
MAE-iso & \textbf{3.50e-3} & 7.07e-1 & \textbf{6.81e-3} & \textbf{9.64e-3} & \textbf{9.47e-4}\\
MAE-con & \textbf{3.58e-3} & 7.07e-1 & \textbf{6.81e-3} & 9.65e-3 & \textbf{9.36e-4}\\
\midrule
GeomAE & 5.30e-3 & \textbf{7.60e-1} &1.35 & 8.82e-2 & 1.8e-3\\
GGAE & 4.33e-3 & \textbf{7.73e-1} & 1.36 & 9.24e-2 & 1.92e-3\\
GRAE & 4.88e-3 & 6.97e-1 & 9.61e-3 & 1.39e-2 & 1.21e-3\\
SPAE & 6.11e-3 & 7.07e-1 & 6.94e-3 & \textbf{9.52e-3} & 9.88e-4\\
TSNE & NaN & 3.32e-1 & 1.79e-2 & 2.60e-2& 3.46e-3\\
UMAP & NaN & 3.28e-1 & 1.56e-2 & 1.55e-2 & 1.64e-3\\
Vanilla AE & 5.46e-3 & 7.31e-1 & 8.65e-2 & 5.03e-2 & 2.47e-3\\
\bottomrule
\end{tabular}
\label{tab:numerical_teapot}
\end{table}

\begin{table}[ht]
\centering
\caption{Quantitative comparison of our method against various baselines on the \textbf{Objective Tracking} dataset. For recon. and KL, lower values indicate better performance, while higher KNN scores are preferable. The best two results are indicated by bold text.}
\begin{tabular}{cccccc}
\toprule
Method & recon. ($\downarrow$) & kNN ($\uparrow$) & KL$_{0.01}$ ($\downarrow$) & KL$_{0.1}$ ($\downarrow$) & KL$_{1}$ ($\downarrow$) \\
\midrule
MAE-iso & 9.72e-3 & 8.08e-1 & 2.21e-2 & \textbf{9.18e-2}& 2.37e-3\\
MAE-con & 8.64e-3 & 8.09e-1 & 2.23e-2 & \textbf{9.19e-2} & 2.38e-3\\
\midrule
GeomAE & \textbf{7.60e-3} & 3.17e-1 &1.53e-1 & 1.05e-1 & 2.84e-3\\
GGAE & 7.65e-3 & 3.20e-1 & 1.43e-1 & 1.04e-1 & 2.87e-3\\
GRAE & \textbf{5.43e-3} & 7.80e-1 & 2.33e-1 & 9.42e-2 & 2.58e-3\\
SPAE & 7.80e-3 & \textbf{8.14e-1} & 3.03e-2 & 9.35e-2 & \textbf{2.28e-3}\\
TSNE & NaN & \textbf{8.14e-1} & \textbf{1.07e-2} & 1.22e-1& 2.77e-3\\
UMAP & NaN & \textbf{8.24e-1} & \textbf{1.96e-2} & 1.11e-1 & \textbf{2.26e-3}\\
Vanilla AE & 8.89-3 & 3.67e-1 & 1.10e-1 & 1.58e-1 & 5.02e-3\\
\bottomrule
\end{tabular}
\label{tab:numerical_objective}
\end{table}

\begin{table}[t]
\centering
\caption{Quantitative comparison of our method against various baselines on the \textbf{dSprites} dataset. For recon. and KL, lower values indicate better performance, while higher KNN scores are preferable. The best two results are indicated by bold text.}
\begin{tabular}{cccccc}
\toprule
Method & recon. ($\downarrow$) & kNN ($\uparrow$) & KL$_{0.01}$ ($\downarrow$) & KL$_{0.1}$ ($\downarrow$) & KL$_{1}$ ($\downarrow$) \\
\midrule
MAE-iso & 1.42e-2 & 7.12e-1 & 5.88e-2 & 9.12e-2 & 1.15e-3\\
MAE-con & 2.63e-3& 7.16e-1& 5.53e-2 &9.39e-2 & 1.03e-3\\
\midrule
GeomAE & 6.15e-3 & 7.73e-1 & 1.21 &1.10 & 2.14e-3\\
GGAE & \textbf{2.48e-3} & \textbf{7.87e-1} & 9.35e-2 & \textbf{6.07e-2} & \textbf{5.22e-4}\\
GRAE & 2.59e-3 & 7.81e-1 & 6.17e-2& 7.51e-2 & \textbf{4.71e-4}\\
SPAE & \textbf{2.01e-3} & 7.16e-1 & 4.43e-2 & 1.00 & 1.20e-3\\
TSNE & NaN & 4.52e-1 & \textbf{6.32e-3} & 2.42 & 6.70e-3\\
UMAP & NaN & 4.49e-1 & \textbf{8.23e-3} & 2.40 & 7.14e-3\\
Vanilla AE & 2.57e-3 & \textbf{7.82e-1}& 1.31 & \textbf{6.94e-2} & 1.17e-3\\
\bottomrule
\end{tabular}
\label{tab:numerical_dsprite}
\end{table}


\end{document}